\documentclass[letterpaper]{article} 
\usepackage{aaai2026}  
\usepackage{times}  
\usepackage{helvet}  
\usepackage{courier}  
\usepackage[hyphens]{url}  
\usepackage{graphicx} 
\urlstyle{rm} 
\usepackage{natbib}  
\usepackage{caption} 
\frenchspacing  
\setlength{\pdfpagewidth}{8.5in}  
\setlength{\pdfpageheight}{11in}  
%
\usepackage{algorithm}
\usepackage{algorithmic}

%
\usepackage{newfloat}
\usepackage{listings}
\DeclareCaptionStyle{ruled}{labelfont=normalfont,labelsep=colon,strut=off} 
\lstset{%
	basicstyle={\footnotesize\ttfamily},
	numbers=left,numberstyle=\footnotesize,xleftmargin=2em,
	aboveskip=0pt,belowskip=0pt,%
	showstringspaces=false,tabsize=2,breaklines=true}
\floatstyle{ruled}
\newfloat{listing}{tb}{lst}{}
\floatname{listing}{Listing}

\usepackage{amssymb}
\usepackage{amsmath}
\usepackage{booktabs}
\usepackage{multirow}
\usepackage{amsthm}
\newtheorem{theorem}{Theorem}
\newtheorem{corollary}[theorem]{Corollary}
%
\pdfinfo{
/TemplateVersion (2026.1)
}

\setcounter{secnumdepth}{0} 

%


\title{GateRA: Token-Aware Modulation for Parameter-Efficient Fine-Tuning}
\author{
    Jie Ou\textsuperscript{\rm 1}, 
    Shuaihong Jiang\textsuperscript{\rm 1}, 
    Yingjun Du\textsuperscript{\rm 2}\thanks{Corresponding author.}, 
    Cees G. M. Snoek\textsuperscript{\rm 2}
}
\affiliations{
    \textsuperscript{\rm 1}Yuanzhigu Technology Co., Ltd., Chengdu, China\\ 

    \textsuperscript{\rm 2} University of Amsterdam, Amsterdam, Netherlands\\


    oujieww6@gmail.com, y.du@uva.nl
}

\usepackage{bibentry}

\begin{document}

\maketitle

\begin{abstract}
Parameter-efficient fine-tuning (PEFT) methods, such as LoRA, DoRA, and HiRA, enable lightweight adaptation of large pre-trained models via low-rank updates.  
However, existing PEFT approaches apply static, input-agnostic updates to all tokens, disregarding the varying importance and difficulty of different inputs. This uniform treatment can lead to overfitting on trivial content or under-adaptation on more informative regions, especially in autoregressive settings with distinct prefill and decoding dynamics.
In this paper, we propose \textbf{GateRA}, a unified framework that introduces token-aware modulation to dynamically adjust the strength of PEFT updates. By incorporating adaptive gating into standard PEFT branches, GateRA enables selective, token-level adaptation—preserving pre-trained knowledge for well-modeled inputs while focusing capacity on challenging cases. Empirical visualizations reveal phase-sensitive behaviors, where GateRA automatically
suppresses updates for redundant prefill tokens while emphasizing adaptation during decoding.
To promote confident and efficient modulation, we further introduce an entropy-based regularization that encourages near-binary gating decisions. This regularization prevents diffuse update patterns and leads to interpretable, sparse adaptation without hard thresholding.
Finally, we present a theoretical analysis showing that GateRA induces a soft gradient-masking effect over the PEFT path, enabling continuous and differentiable control over adaptation. Experiments on multiple commonsense reasoning benchmarks demonstrate that GateRA consistently outperforms or matches prior PEFT methods.
\end{abstract}
\section{Introduction}

Large pre-trained models have revolutionized natural language processing and vision-language tasks, yet their massive parameter counts make full fine-tuning increasingly impractical. Parameter-efficient fine-tuning (PEFT) methods, such as LoRA~\cite{hu2021lora}, DoRA~\cite{liu2024dora}, and HiRA~\cite{huang2025hira}, address this by injecting trainable low-rank updates into frozen backbones, enabling task adaptation with minimal parameter overhead.

Despite their success, existing PEFT methods such as LoRA, DoRA, and HiRA typically apply static low-rank updates with a fixed adaptation strength across all tokens, without considering differences in content, position, or confidence.
This uniform strategy assumes that all tokens benefit equally from fine-tuning. In reality, however, many tokens that are frequent or structurally simple are already well captured by the pre-trained backbone. Others, such as tokens that are domain-specific or contextually ambiguous, may require stronger adaptation.
Applying the same update to all tokens can lead to inefficient use of capacity and, in some cases, overfitting to irrelevant features.

This limitation is especially problematic in autoregressive generation, where it persists across different phases (prefill and decoding) with the model behaving differently in each phase, and even different structures within the same layer exhibiting distinct behavioral patterns. 
As shown in Figure~\ref{fig:alpha_phase}, our method learns to assign near-zero modulation weights to in-distribution tokens while amplifying adaptation for out-of-distribution tokens, where predictive uncertainty is higher and error propagation more severe. 
This behavior suggests that fine-tuning should be applied selectively and adaptively, motivating the need for a token-aware gating mechanism to modulate update strength at a finer granularity.

To overcome the limitations of existing PEFT methods that apply uniform adaptation across all tokens, we propose GateRA, a unified framework that introduces token-aware modulation into low-rank adaptation. \textit{First}, GateRA dynamically adjusts the strength of adaptation for each token based on input content, enabling more efficient allocation of adaptation capacity by focusing on ambiguous or task-critical tokens while preserving the pre-trained knowledge for others. \textit{Second}, we incorporate an entropy-based regularization to guide the gating function toward confident, near-binary decisions, which enhances interpretability and leads to sparse, selective adaptation patterns. \textit{Third}, we theoretically analyze how GateRA impacts gradient flow and show that its gating mechanism induces a soft masking effect that suppresses noisy updates while preserving informative gradients, thereby achieving a better trade-off between plasticity and stability.

\begin{figure*}[t]
    \centering
    \includegraphics[width=\textwidth]{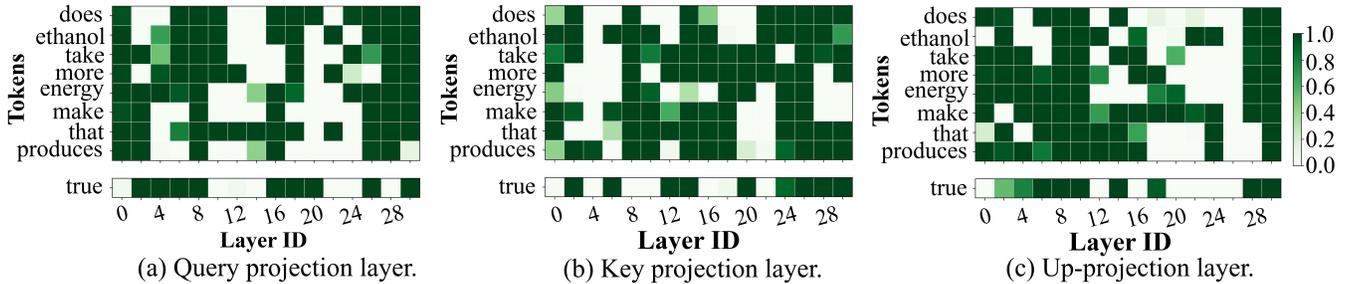}
    \caption{
    \textbf{Visualization of learned token-wise modulation values $g(x)$ across different layers of a decoder-only language model.}
    We observe that many tokens are assigned near-zero modulation weights, particularly in early self-attention layers, suggesting that the pre-trained weights are already sufficient for these inputs.
    This sparsity reveals that full adaptation is not necessary for all tokens, and highlights the need for selective, token-aware fine-tuning.
    }
    \label{fig:alpha_phase}
\end{figure*}

Empirically, we evaluate GateRA on a diverse set of commonsense reasoning and autoregressive generation benchmarks. The method consistently matches or surpasses existing PEFT approaches such as LoRA, DoRA, and HiRA, while introducing only a small number of additional parameters at inference time.
Furthermore, visualizations of token-level gating scores reveal clear, distribution-sensitive behavior: GateRA suppresses updates for in-distribution tokens that align with the model's existing knowledge and amplifies them when the model encounters out-of-distribution tokens with higher uncertainty. This behavior aligns well with human intuition and highlights the interpretability of our framework.

\section{Related Work}

\paragraph{Parameter-Efficient Fine-Tuning.}
Parameter-efficient fine-tuning (PEFT) methods aim to adapt large-scale pre-trained models to downstream tasks while minimizing the number of trainable parameters.
Early approaches include adapter tuning~\cite{houlsby2019parameter}, prefix-tuning~\cite{li2021prefix}, and prompt-tuning~\cite{lester2021power}, which insert task-specific modules or learn prompt embeddings. 
LoRA~\cite{hu2021lora} further improves efficiency by injecting trainable low-rank matrices into weight projections, achieving strong performance across a range of language and vision tasks.
While these methods reduce memory and compute costs, they typically apply uniform updates across all tokens, lacking the flexibility to differentiate token-level importance.
\textit{In contrast, our method introduces a token-aware gating mechanism that modulates the update strength dynamically per-token, allowing fine-grained control over adaptation.}

\paragraph{Extensions and Variants of LoRA.}
Numerous LoRA variants have been proposed to improve flexibility and robustness. 
AdaLoRA~\cite{zhang2023adaptive} adapts rank allocation across layers based on sensitivity; QLoRA~\cite{dettmers2023qlora} combines LoRA with quantization for low-resource fine-tuning.
DoRA~\cite{liu2024dora} decomposes weights into direction and magnitude for improved gradient flow, while MoRA~\cite{jiang2024mora}  employs a square matrix for high-rank adaptation. 
LoRASC~\cite{li2024lorasc} utilizes cascaded learning and slow-fast updates, and other efforts explore structure-aware improvements such as tensorized~\cite{yang-etal-2024-loretta}, Fourier-based~\cite{borse2024foura}, or expert-based~\cite{zhang2024milora} designs.
\textit{Our method is orthogonal and complementary to these extensions, focusing on dynamic token-wise modulation that can be applied on top of additive (LoRA), directional (DoRA), or multiplicative (HiRA) PEFT variants.}

\paragraph{Dynamic and Selective Adaptation.}
A few recent works have begun to explore input-dependent or sparse adaptation strategies.
Apart~\cite{qi2025adaptive} uses instance-wise adapter selection via a routing mechanism; UNIPELT~\cite{mao2022unipelt} learns a task-specific gating of multiple adaptation modules.
In vision, TR-PTS~\cite{luo2025tr} selectively activates PEFT modules based on token and task.
These methods introduce task-level or layer-level~\cite{yao2024layer} control, but rarely operate at the fine-grained token level during inference.
\textit{Our approach differs by directly modeling token-level update decisions through a gating function trained end-to-end, and further regularized to produce sparse, interpretable modulation across time.}

\section{Method}

\subsection{Overview of GateRA}

Parameter-efficient fine-tuning (PEFT) enables the adaptation of large pre-trained models by injecting small, trainable modules into frozen backbones. Among various PEFT strategies, HiRA~\cite{huang2025hira} has demonstrated strong performance by applying multiplicative low-rank updates to the weight matrix. Specifically, HiRA modifies the pre-trained weight $W_0$ as follows:
\[
W' = (AB + 1) \cdot W_0,
\]
where $A \in \mathbb{R}^{d \times r}$ and $B \in \mathbb{R}^{r \times d}$ are low-rank matrices and $\cdot$ denotes element-wise multiplication. This formulation allows HiRA to scale or suppress different dimensions of the backbone weights based on learned structure.

However, a key limitation remains: HiRA applies the same low-rank update to all input tokens, regardless of their difficulty or informativeness. This uniform treatment fails to account for token-level variability in uncertainty, importance, or phase (e.g., prefill vs. decoding). For instance, during autoregressive generation, the model may require stronger adaptation for novelty but require little to no updates for tokens representing existing knowledge already well-captured by the model. Static updates may therefore lead to overfitting on trivial tokens or under-adaptation to hard cases.
\begin{figure}[t]
    \centering
    \includegraphics[width=\linewidth]{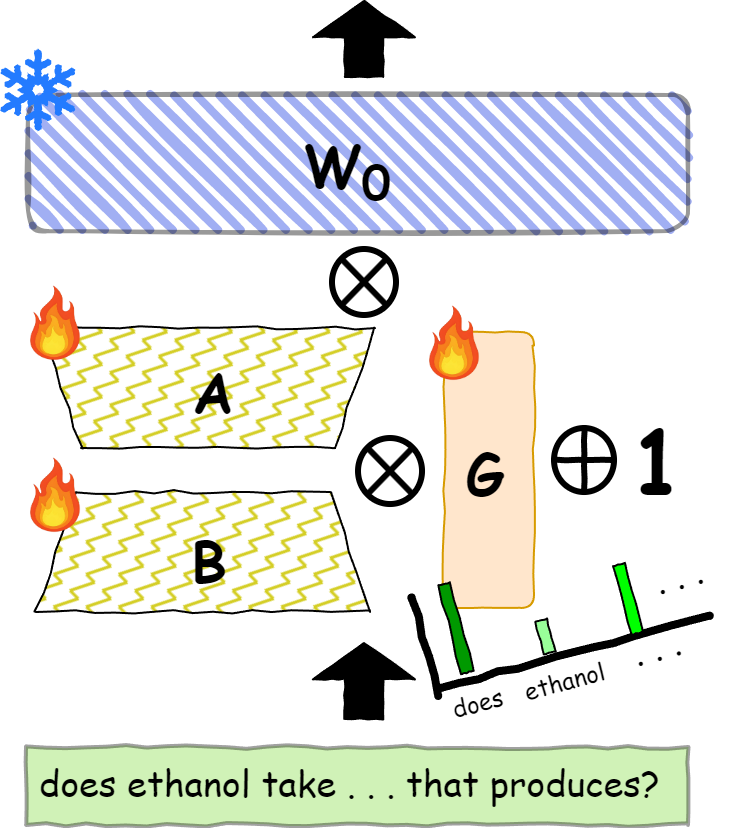}
\caption{
\textbf{Overview of the GateRA framework.} GateRA modulates the frozen pre-trained weights $W_0$ via a token-wise gating mechanism. A pair of low-rank matrices $A$ and $B$ generate parameter-efficient updates, which are scaled by a gating matrix $G$ and applied multiplicatively as $(G \odot (AB)+1) \cdot W_0$. This formulation enables dynamic, token-aware adaptation while preserving the backbone model's stability.
}
    \label{fig:framework}
\end{figure}

To address this, we propose \textbf{GateRA}, a simple yet effective extension to HiRA that introduces token-aware modulation. Instead of applying the same $(AB + 1)$ update for all tokens, GateRA replaces it with a dynamic form:
\[
W' = (\mathbf{g}(x) \cdot AB + 1) \cdot W_0,
\]
where $\mathbf{g}(x) \in [0, 1]$ is a gating scalar or vector predicted from the input token representation $x$. Intuitively, $\mathbf{g}(x)$ determines the strength of adaptation for each token, enabling selective fine-tuning at a finer granularity. Easy or redundant tokens may be suppressed (i.e., $\mathbf{g}(x) \approx 0$), while informative or uncertain ones receive amplified updates.

Figure~\ref{fig:framework} provides an overview of the GateRA architecture. A lightweight gating network is used to produce token-level modulation values, which control the flow of adaptation during both training and inference. This mechanism introduces only a small number of additional parameters and adds slight overhead during deployment and training. In the following sections, we describe the gating module design, our entropy-based regularization to promote confident gating, and a theoretical analysis showing how GateRA induces soft gradient masking for better knowledge preservation.

\subsection{Token-Aware Gating}

In standard PEFT methods such as HiRA, the low-rank adaptation term is applied uniformly across all tokens and time steps, regardless of their semantic importance or task-specific difficulty. However, as illustrated in Figure~\ref{fig:alpha_phase}, different tokens contribute unequally to model behavior, especially in autoregressive generation. For instance, tokens in the prefill stage often involve deterministic copying of input content and may not require adaptation, whereas tokens in the decoding stage typically involve uncertainty, reasoning, or generation from ambiguous contexts. Even within stages, module-level requirements differ. This motivates the need for a more flexible adaptation scheme that can selectively modulate parameter updates on a per-token basis.

\paragraph{Gating Module}

To address this, we introduce a lightweight gating module that dynamically controls the strength of low-rank adaptation based on the current token’s input representation. Concretely, given an input token embedding $x \in \mathbb{R}^d$, we compute a modulation coefficient via a small neural network:
\[
g(x) = \sigma(W_g x + b_g),
\]
where $W_g \in \mathbb{R}^{1 \times d}$ and $b_g \in \mathbb{R}$ are learnable parameters and $\sigma(\cdot)$ denotes the sigmoid activation. The scalar output $g(x) \in (0,1)$ adjusts the contribution of the low-rank component in a token-specific manner.

\paragraph{Modulated HiRA Update}

We instantiate this gating mechanism within the HiRA framework. Recall that HiRA adapts the frozen backbone via a multiplicative low-rank update of the form:
\[
W' = (AB + 1) \cdot W_0,
\]
where $AB$ is the low-rank term and $\cdot$ denotes element-wise multiplication. GateRA modifies this formulation by injecting the learned gating signal $g(x)$:
\[
W' = (g(x) \cdot AB + 1) \cdot W_0.
\]
In this way, the adaptation strength is no longer fixed but dynamically varies per input token. When $g(x) \approx 0$, the adaptation vanishes and the model relies entirely on the pre-trained weights. When $g(x) \approx 1$, full low-rank adaptation is applied.

This mechanism enables the model to learn context-aware adaptation strategies. 
Gating values in Figure~\ref{fig:alpha_phase} reflect semantic patterns: low for existing knowledge, high for novel tokens.
Such behavior is learned automatically from supervision signals, without any explicit phase annotation or token-level labels.
This token-aware control scheme allows GateRA to preserve pre-trained knowledge for trivial inputs while allocating adaptation capacity more judiciously, which we show leads to better generalization and interpretability.

\subsection{Entropy-Based Regularization}

While the gating mechanism introduced in GateRA enables token-level modulation of adaptation strength, unconstrained learning of $g(x)$ may lead to overly smooth or indecisive gating behaviors. In practice, we observe that without additional regularization, the model tends to assign ambiguous gating values (e.g., close to 0.5) to some tokens, resulting in uniformly soft updates across tokens and diminishing the benefits of selective adaptation.

To address this, we introduce an entropy-based regularization term that encourages confident and sparse gating decisions. Specifically, we treat each gating value $g(x) \in (0,1)$ as a Bernoulli probability and penalize its entropy:
\begin{align}
\mathcal{L}_{\text{ent}} 
&= \frac{1}{N} \sum_{i=1}^N H(g(x_i)) \notag \\
&= - \frac{1}{N} \sum_{i=1}^N \bigg[
    g(x_i) \log g(x_i) \notag \\
&\quad + (1 - g(x_i)) \log (1 - g(x_i))
\bigg]
\end{align}
where $N$ is the number of tokens in the batch. This term reaches its minimum when $g(x)$ approaches 0 or 1, and its maximum when $g(x) = 0.5$, thereby promoting near-binary gating.

The benefits of this regularization are twofold. First, it enhances interpretability by forcing the model to make discrete-like adaptation decisions, highlighting which tokens trigger updates. Second, it improves generalization by reducing the model’s tendency to over-adapt on trivial or noisy inputs, as gating values closer to 0 effectively suppress adaptation for low-importance tokens.

Empirically, we find that incorporating this entropy penalty yields both improved performance and crisper gating visualizations (cf. Figure~\ref{fig:alpha_phase}). It also facilitates downstream analysis by producing sparse token-level attribution maps, which can be used to better understand when and where adaptation is truly needed.

\section{Theoretical Analysis}

We now present a theoretical understanding of GateRA, highlighting how its token-aware gating mechanism modulates gradient flow and induces selective adaptation. Our analysis draws from the view that multiplicative PEFT can be interpreted as a dynamic reweighting of parameter updates. In contrast to prior methods such as HiRA, which apply a uniform scaling factor across all tokens, GateRA introduces a data-dependent modulation that adapts to token-level variation in uncertainty and informativeness.

\paragraph{Setup}

Let $x \in \mathbb{R}^d$ be the input token embedding and $W_0 \in \mathbb{R}^{d_\text{out} \times d}$ be the frozen pre-trained weight matrix. A PEFT method such as HiRA ($\gamma=1$) produces an adapted output of the form:
\[
y = (1 + \gamma \cdot AB) \cdot W_0 x,
\]
where $\gamma$ can be a learnable scalar (or layer-wise parameter). In GateRA, we instead use a token-specific modulation:
\[
y = (1 + g(x)\cdot AB) \cdot W_0 x,
\]
where $g(x) \in [0, 1]$ is a gating function implemented via a neural network followed by sigmoid activation. We denote the PEFT component as $W_\Delta = g(x)\cdot AB \cdot W_0$. 

Let $\mathcal{L}(y, t)$ denote a token-level loss with target $t$. Our analysis focuses on the gradients with respect to the PEFT component $W_\Delta$, and how gating modulates its contribution to training dynamics.

\subsection{Gradient Modulation Bound}

We first characterize how the gradient norm is bounded by the gating function, offering selective update control.

\begin{theorem}[Token-Aware Gradient Modulation]
\label{thm:grad-bound}
Let $\mathcal{L}(y, t)$ be convex and differentiable in $y$. Then, under the GateRA formulation with $W_\Delta = g(x)\cdot AB\cdot W_0$, the gradient norm with respect to the base adapter $AB$ satisfies:
\[
\left\|\frac{\partial \mathcal{L}}{\partial AB}\right\|_F \leq g(x) \cdot \|W_0\| \cdot \left\|\frac{\partial \mathcal{L}}{\partial y}\right\| \cdot \|x\|.
\]
\end{theorem}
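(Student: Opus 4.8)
The plan is to treat this as a direct chain-rule computation, exploiting the fact that the adapter $AB$ enters the forward map \emph{linearly} and is scaled by exactly the gating scalar $g(x)$. First I would expand the GateRA output as $y = W_0 x + g(x)\,(AB)\,(W_0 x)$, writing $u := W_0 x$ so that $y = u + g(x)\,(AB)\,u$. The only term depending on $AB$ is $g(x)\,(AB)\,u$, and that dependence is affine, so the Jacobian $\partial y/\partial(AB)$ carries the factor $g(x)$ out front with no further structure. This is the observation that makes the whole bound clean.

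Next I would compute the gradient entrywise. Since $y_k = u_k + g(x)\sum_l (AB)_{kl}\,u_l$, differentiating the loss through $y$ gives $\partial\mathcal{L}/\partial (AB)_{ij} = g(x)\,(\partial\mathcal{L}/\partial y_i)\,u_j$, i.e. in matrix form
\[
\frac{\partial\mathcal{L}}{\partial (AB)} = g(x)\,\frac{\partial\mathcal{L}}{\partial y}\,(W_0 x)^{\top}.
\]
This identifies the gradient as a scaled rank-one outer product, which is the crux of the argument. The remaining work is purely norm estimation.

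The key identity I would invoke is that for any vectors $a,b$ the outer product satisfies $\|ab^{\top}\|_F = \|a\|\,\|b\|$; applying it yields $\|\partial\mathcal{L}/\partial(AB)\|_F = g(x)\,\|\partial\mathcal{L}/\partial y\|\,\|W_0 x\|$, where $g(x)\ge 0$ lets me pull the scalar outside the norm without an absolute value. I then close the argument with submultiplicativity of the operator norm, $\|W_0 x\| \le \|W_0\|\,\|x\|$, recovering the stated bound exactly.

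The main obstacle is less the calculus than pinning down the outer-product form and the norm conventions. One must verify the rank-one Frobenius identity and, more importantly, confirm that the ``$\cdot$'' multiplying $AB$ is read as matrix action on $W_0 x$ rather than the Hadamard product used in the HiRA method description: under the element-wise reading the gradient becomes $g(x)\,\mathrm{diag}(\partial\mathcal{L}/\partial y)\,W_0\,\mathrm{diag}(x)$, whose Frobenius norm does not collapse to the clean product of $\|\partial\mathcal{L}/\partial y\|$, $\|W_0\|$, and $\|x\|$. Only the matrix reading gives the factorized bound, so I would make that interpretation explicit. Finally, I would remark that convexity of $\mathcal{L}$ is not actually needed for the inequality—differentiability in $y$ alone guarantees that $\partial\mathcal{L}/\partial y$ exists—so the estimate holds more generally than the hypotheses suggest.
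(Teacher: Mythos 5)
Your proposal follows essentially the same route as the paper's proof: apply the chain rule to extract the scalar factor $g(x)$ from the linear dependence of $y$ on $AB$, identify the gradient as a rank-one outer product, and bound its Frobenius norm via $\|ab^{\top}\|_F=\|a\|\,\|b\|$ and $\|W_0x\|\le\|W_0\|\,\|x\|$. If anything, your write-up is slightly more careful than the paper's: you get the factor ordering right (the paper writes $g(x)\cdot W_0\cdot\frac{\partial\mathcal{L}}{\partial y}\cdot x^{\top}$, which does not typecheck as stated), and your observations that convexity is never used and that the bound requires the matrix-product rather than the Hadamard reading of ``$\cdot$'' are both correct and worth keeping.
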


\begin{proof}
We have:
\[
y = W_\Delta x = g(x)\cdot AB\cdot W_0 x.
\]
By the chain rule, the gradient of the loss with respect to $AB$ is:
\[
\frac{\partial \mathcal{L}}{\partial AB} = \frac{\partial \mathcal{L}}{\partial y} \cdot \frac{\partial y}{\partial AB} = g(x) \cdot W_0 \cdot \frac{\partial \mathcal{L}}{\partial y} \cdot x^\top.
\]
Taking the Frobenius norm:
\[
\left\| \frac{\partial \mathcal{L}}{\partial AB} \right\|_F \leq g(x) \cdot \|W_0\| \cdot \left\| \frac{\partial \mathcal{L}}{\partial y} \right\| \cdot \|x\|,
\]
as required.
\end{proof}

\noindent This result shows that the magnitude of updates to the adaptation branch is directly controlled by $g(x)$: tokens with small gating values will receive negligible updates, preserving pre-trained features, while tokens with large $g(x)$ enable strong task-specific adaptation.

\subsection{Selective Adaptation and Regularization}

Theorem~\ref{thm:grad-bound} reveals that GateRA effectively implements a \textit{soft masking mechanism} over gradients. This leads to the following corollary:

\begin{corollary}[Selective Gradient Suppression]
For tokens where $g(x) \to 0$, the PEFT gradient vanishes:
\[
\lim_{g(x) \to 0} \left\|\frac{\partial \mathcal{L}}{\partial W_\Delta}\right\| = 0.
\]
Thus, GateRA preserves pre-trained knowledge for confidently modeled tokens and avoids overfitting.
\end{corollary}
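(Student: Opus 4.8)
The plan is to derive the corollary as a direct consequence of the Frobenius-norm bound established in Theorem~\ref{thm:grad-bound}, using a squeeze (sandwich) argument. First I would recall that the theorem gives, for the trainable low-rank factor,
\[
0 \le \left\|\frac{\partial \mathcal{L}}{\partial AB}\right\|_F \le g(x)\cdot \|W_0\|\cdot \left\|\frac{\partial \mathcal{L}}{\partial y}\right\|\cdot \|x\|,
\]
and then observe that the entire right-hand side is proportional to $g(x)$, with the remaining three factors collected into a single coefficient that I will argue stays finite in the limit $g(x)\to 0$.

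The second step is to verify that this coefficient is bounded. Here $\|W_0\|$ and $\|x\|$ are fixed quantities that do not depend on $g(x)$, so only $\|\partial\mathcal{L}/\partial y\|$ requires attention. Since $y = (1 + g(x)\,AB)\,W_0 x$, letting $g(x)\to 0$ drives $y \to W_0 x$, a fixed input-dependent point; because $\mathcal{L}(\cdot,t)$ is convex and differentiable, its gradient is continuous, so $\|\partial\mathcal{L}/\partial y\|$ converges to the finite value $\|\nabla_y \mathcal{L}(W_0 x, t)\|$. Hence there is a finite constant $C$, depending on $x$, $W_0$, and $t$ but not on $g(x)$, such that the right-hand side is at most $C\,g(x)$.

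With boundedness in hand, the third step is to apply the squeeze theorem: since the Frobenius norm is nonnegative and bounded above by $C\,g(x)$, taking $g(x)\to 0$ forces it to $0$. Finally I would connect this back to the stated quantity $\partial\mathcal{L}/\partial W_\Delta$: since $W_\Delta = g(x)\cdot AB\cdot W_0$ and the only trainable object in this branch is the low-rank factor $AB$ (with $W_0$ frozen), the gradient signal that actually reaches the PEFT path is governed by the same $g(x)$ prefactor, so it vanishes identically in the limit.

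The step I expect to be the main obstacle is the boundedness of $\|\partial\mathcal{L}/\partial y\|$ as $g(x)\to 0$: one must confirm that perturbing $g(x)$ does not blow up the upstream gradient, which relies on the continuity of $\nabla_y\mathcal{L}$ and on $y$ remaining in a neighborhood of $W_0 x$. A secondary subtlety is the mild notational mismatch between the theorem, phrased in terms of $\partial\mathcal{L}/\partial AB$, and the corollary, phrased in terms of $\partial\mathcal{L}/\partial W_\Delta$; I would resolve this by noting that the two differ only by the frozen, $g$-independent factor $W_0$, so the $g(x)$-scaling, and therefore the vanishing limit, is identical for both.
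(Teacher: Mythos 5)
Your proposal is correct and follows essentially the same route as the paper, which presents the corollary as an immediate consequence of Theorem~\ref{thm:grad-bound}: the gradient bound carries a multiplicative $g(x)$ prefactor, so a squeeze argument sends the norm to zero as $g(x)\to 0$. Your additional care in checking that $\left\|\partial\mathcal{L}/\partial y\right\|$ stays bounded (via continuity of the gradient as $y\to W_0x$) and your resolution of the $\partial\mathcal{L}/\partial AB$ versus $\partial\mathcal{L}/\partial W_\Delta$ mismatch in favor of the gradient reaching the trainable adapter are refinements of, not departures from, the paper's implicit argument.
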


\noindent In practice, this mechanism allows the model to focus adaptation capacity on harder tokens while skipping trivial or well-represented ones.

Moreover, the entropy regularization on $g(x)$ plays a crucial role in promoting confident decisions:
\[
\mathcal{L}_{\text{ent}} = \mathbb{E}_{x} \left[ -g(x) \log g(x) - (1 - g(x)) \log (1 - g(x)) \right].
\]

This encourages $g(x)$ to approach binary values (0 or 1), reinforcing the soft masking effect while maintaining differentiability. It naturally promotes sparse and interpretable adaptation patterns without requiring hand-crafted thresholds.

\begin{table*}[t]
    \centering
    \resizebox{\textwidth}{!}{
\begin{tabular}{llcccccccccc}
\toprule
Model & Method & Params(\%) & BoolQ & PIQA & SIQA & ARC-c & ARC-e & OBQA & HelaS & WinoG & Average \\
\midrule
ChatGPT & - & - & 73.10 & 85.40 & 68.50 & 79.90 & 89.80 & 74.80 & 78.50 & 66.10 & 77.01 \\
\midrule
\multirow{7}{*}{L2-7B} & Prompt Tuning & 0.0012 & 55.93 & 12.35 & 30.50 & 6.06 & 8.63 & 9.40 & 6.91 & 40.57 & 21.29 \\
 & P-Tuning & 0.7428 & 58.75 & 36.02 & 0.20 & 0.17 & 1.98 & 0.80 & 0.01 & 0.00 & 12.24 \\
 & LoRA  & 0.8256 & 69.80 & 79.90 & 79.50 & 64.70 & 79.80 & 81.00 & 83.60 & 82.60 & 77.61 \\
 & DoRA  & 0.8256 & 71.80 & 83.70 & 76.00 & 68.20 & 83.70 & 82.40 & 89.10 & 82.60 & 79.69 \\
 & MoRA  & 0.8241 & 72.17 & 80.79 & 79.53 & 71.42 & 85.31 & 81.20 & 29.09 & 80.19 & 72.46 \\
 & HiRA  & 0.8256 & 71.22 & 83.35 & 79.53 & 73.81 & 86.74 & 84.60 & 88.12 & 83.98 & 81.42 \\
 &  \textbf{GateRA} &   0.8384&  \textbf{72.84}& \textbf{84.39}& \textbf{80.66}& \textbf{75.26}& \textbf{88.22}& \textbf{85.40}& \textbf{88.58}& \textbf{84.85}& \textbf{82.52}\\
\midrule
\multirow{7}{*}{L3-8B} & Prompt Tuning & 0.0010 & 56.85 & 45.05 & 36.13 & 31.57 & 32.74 & 29.20 & 14.01 & 50.12 & 36.96 \\
 & P-Tuning & 0.6240 & 59.97 & 11.64 & 8.19 & 7.42 & 8.63 & 9.60 & 1.77 & 37.65 & 18.11 \\
 & LoRA  & 0.7002 & 70.80 & 85.20 & 79.90 & 71.20 & 84.20 & 79.00 & 91.70 & 84.30 & 80.79 \\
 & DoRA  & 0.7002 & 74.60 & 89.30 & 79.90 & 80.40 & 90.50 & 85.80 & 95.50 & 85.60 & 85.20 \\
 & MoRA  & 0.6997 & 74.28 & 87.43 & 80.71 & 79.61 & 91.16 & 85.60 & 43.53 & 86.74 & 78.63 \\
 & HiRA  & 0.7002 & 75.40 & \textbf{89.70} & 81.15 & 82.90 & 93.27 & \textbf{88.32} & 95.36 & 87.70 & 86.72 \\
 & \textbf{GateRA} &  0.7123 &  \textbf{75.72}& 89.45& \textbf{82.19}& \textbf{85.15}& \textbf{93.64}& 87.60& \textbf{96.21}& \textbf{90.29}& \textbf{87.53}\\
\bottomrule
\end{tabular}
}
    \caption{\textbf{Accuracy comparison on commonsense reasoning tasks.}
We report results on eight subtasks from the CommonsenseQA benchmark.
GateRA consistently outperforms all baselines under both LLaMA-2-7B (L2-7B) and LLaMA-3-8B (L3-8B). 
}
    \label{tab:commonsense}
\end{table*}
\subsection{Comparison to Prior Work}

Unlike HiRA, where the modulation scalar $\gamma$ is static and shared across all tokens, GateRA enables fine-grained, token-wise gradient control. This allows:
Token-level interpretability: which tokens are updated can be visualized via $g(x)$ (cf. Figure~\ref{fig:alpha_phase}); Dynamic generalization: GateRA prevents unnecessary updates on trivial tokens, improving out-of-distribution robustness; Better optimization stability: the soft, differentiable gating avoids the brittleness of hard masking. We thus provide a theoretical foundation for the observed empirical benefits of GateRA, and establish a rigorous connection between its gating mechanism and selective representation learning.

\section{Experiment}

We evaluate the proposed \textbf{GateRA} method on three representative tasks to assess its effectiveness in reasoning-intensive settings: commonsense reasoning, open-domain dialogue generation, and mathematical reasoning. These tasks span diverse formats (classification, generation, symbolic reasoning), making them suitable benchmarks to assess the generalization and robustness of token-aware PEFT.

\subsection{Datasets}

\paragraph{Commonsense Reasoning.}
We follow~\cite{hu2023llm} and adopt eight widely used sub-tasks with predefined training and test splits, totaling 170,420 query-answer pairs. The sub-tasks include: \textbf{BoolQ}~\cite{clark2019boolq}: yes/no questions;
\textbf{PIQA}~\cite{bisk2020piqa}: physical commonsense;
\textbf{SIQA}~\cite{sap2019socialiqa}: social reasoning;
\textbf{HellaSwag}~\cite{zellers2019hellaswag}: sentence completion;
\textbf{WinoGrande}~\cite{sakaguchi2021winogrande}: coreference-based commonsense;
\textbf{ARC-e} and \textbf{ARC-c}\cite{clark2018think}: multiple-choice science QA;
\textbf{OBQA}~\cite{mihaylov2018can}: multi-step reasoning questions.
These datasets collectively cover binary, multiple-choice, and cloze-style questions, posing different levels of difficulty for LLMs.

\paragraph{Open-domain Dialogue Generation.}
We use the \textbf{ConvAI2} dataset~\cite{dinan2019second}, which consists of 17,878 multi-turn dialogues for training and 1,000 for testing. Each instance includes persona profiles and a conversational history. Following~\cite{liu2020you,song2021bob,huang2023personalized}, we choose a self-persona configuration where only the speaker’s persona is visible.

\paragraph{Mathematical Reasoning.}
We evaluate symbolic and multi-step reasoning capabilities on \textbf{GSM8K}~\cite{cobbe2021training}, a challenging benchmark consisting of grade-school math word problems requiring structured solution steps.

\subsection{Experimental Settings}

\paragraph{Evaluation Metric.}
For commonsense reasoning, we follow the keyword matching protocol in~\cite{liu2024dora, huang2025hira}, using accuracy as the primary metric. The decoded answer is scanned for specific keywords (e.g., “true”, “B”), and the first match is taken as the prediction. If no match is found, the answer is considered incorrect. This rule-based scheme enables consistent evaluation across different question formats.
For ConvAI2, we report BLEU \cite{papineni2002bleu} and BERTScore \cite{zhang2019bertscore} to measure generation quality and semantic similarity.
For GSM8K, we report accuracy by comparing the model’s final numeric prediction with the ground truth using an exact match.

\paragraph{Baselines.}
We compare \textbf{GateRA} against strong PEFT methods including LoRA~\cite{hu2021lora}, DoRA~\cite{liu2024dora}, MoRA~\cite{jiang2024mora}, and HiRA~\cite{huang2025hira}. To ensure fair comparison, we use the same injection locations (query/key/value/MLP) as HiRA. 

\begin{table*}[t]
\centering
\small
\setlength{\tabcolsep}{4pt}
\begin{tabular}{llcccccccc}
\toprule
Model & Method & Params (\%) & BLEU & BERT F1 & BERT-R & BERT-P & Meteor & R-L & Average \\
\midrule
\multirow{6}{*}{L2-7B}
& Prompt Tuning         & 0.0012 & 0.04 & 72.44 & 77.38 & 68.23 &  0.80 &  0.80 & 36.62 \\
& P-Tuning              & 0.7428 & 0.60 & 83.29 & 83.33 & 83.28 & 15.11 & 12.36 & 46.33 \\
& MoRA         & 0.8241 & 1.09 & 84.09 & 84.65 & 83.59 & 10.97 &  9.57 & 45.66 \\
& LoRA         & 0.8256 & 1.82 & 84.41 & 84.71 & 84.16 & 11.38 & 10.55 & 46.17 \\
& DoRA         & 0.8256 & 1.73 & 84.18 & 84.61 & 83.81 & 11.25 & 10.41 & 46.00 \\
& HiRA         & 0.8256 & {2.70} & {84.86} & {84.98} & {84.77} & 13.56 & {12.80} & 47.28 \\
& \textbf{GateRA} & 0.8384 & \textbf{2.75} & \textbf{85.63} & \textbf{85.73} & \textbf{85.54} & \textbf{14.37} & \textbf{13.74} & \textbf{47.96} \\
\midrule
\multirow{6}{*}{L3-8B}
& Prompt Tuning         & 0.0010 & 1.45 & 82.99 & 82.99 & 83.05 & 14.72 & 13.13 & 46.39 \\
& P-Tuning              & 0.6240 & 1.50 & 81.52 & 81.07 & 82.01 & 15.49 & 13.55 & 45.86 \\
& LoRA         & 0.7002 & 2.26 & 84.32 & 84.00 & 84.67 & 12.51 & 11.77 & 46.59 \\
& DoRA         & 0.7002 & 2.29 & 84.32 & 84.06 & 84.62 & 12.63 & 11.78 & 46.62 \\
& MoRA         & 0.6997 & 1.60 & 84.22 & 84.06 & 84.43 & 12.37 & 11.19 & 46.31 \\
& HiRA         & 0.7002 & {3.41} & {84.81} & {84.40} & {85.25} & 14.87 & {14.05} & 47.80 \\
& \textbf{GateRA} & 0.7123 & \textbf{3.53} & \textbf{85.74} & \textbf{85.31} & \textbf{86.16} & \textbf{15.79} & \textbf{15.13} & \textbf{48.61} \\
\bottomrule
\end{tabular}
\caption{
\textbf{Results on the CONVAI2 dialogue generation task.}
We evaluate BLEU, BERT-based F1/Recall/Precision, Meteor, and ROUGE-L.
}
\label{tab:convai2}
\end{table*}

\paragraph{Implementation Details.}
Following~\cite{liu2024dora}, we evaluate on LLaMA-2-7B~\cite{touvron2023llama} and LLaMA-3-8B~\cite{grattafiori2024llama}. All models are fine-tuned using the AdamW optimizer~\cite{loshchilov2017decoupled} with a learning rate of 1e-3 and warmed up with 100 steps.

\subsection{Main Results}
\paragraph{Results on Commonsense Reasoning Tasks}

As shown in Table~\ref{tab:commonsense}, GateRA achieves superior performance across all commonsense reasoning benchmarks on both the LLaMA-2-7B and LLaMA-3-8B models. Specifically, GateRA obtains an average accuracy of \textbf{82.52\%} on LLaMA-2-7B, surpassing the best baseline HiRA (81.42\%) and outperforming LoRA-based methods such as DoRA and MoRA. On LLaMA-3-8B, GateRA achieves \textbf{87.53\%}, exceeding the previous best HiRA score of 86.72\%. GateRA also consistently outperforms baselines on challenging tasks like PIQA, OBQA, and WinoGrande, demonstrating the effectiveness of its token-aware adaptation. These results validate the design of GateRA, where selectively modulating adaptation strength per token improves generalization and efficiency. With a comparable parameter budget to LoRA variants, GateRA enables more adaptive and effective tuning.

\paragraph{Results on Conversational Task}

Table~\ref{tab:convai2} presents the results on the CONVAI2 dialogue generation benchmark. GateRA achieves the best performance on both LLaMA-2-7B and LLaMA-3-8B, with average scores of \textbf{47.96} and \textbf{48.61}, respectively. These results surpass HiRA and all other PEFT baselines across all evaluated metrics, including BLEU, BERT-based F1/Recall/Precision, Meteor, and ROUGE-L.  This further confirms the strength of our token-aware gating strategy in controlling adaptation behavior for open-domain generation.

\paragraph{Results on Mathematical Reasoning Tasks}

We evaluate the performance of GateRA on mathematical reasoning tasks using the MetaMath dataset ~\cite{yu2023metamath} for training and GSM8K for evaluation. As reported in Table~\ref{tab:math}, GateRA achieves state-of-the-art performance compared to all PEFT baselines under the LLaMA-3-8B setting. Specifically, GateRA attains an accuracy of \textbf{72.11\%}, outperforming HiRA (70.81\%), MoRA (67.98\%), DoRA (66.12\%), and LoRA (65.89\%). These findings demonstrate that GateRA is effective for complex 
reasoning, by leveraging token-aware modulation to enhance expressiveness while maintaining parameter efficiency.

\begin{table}[t]
\centering
\begin{tabular}{l|l|c|c}
\toprule
\textbf{Model} & {Method} & \textbf{Trainable} & \textbf{GSM8K} \\
\midrule
\multirow{7}{*}{L3-8B}
& Prompt Tuning & 0.0012 & 15.62 \\
& P-Tuning      & 0.7428 & 2.65  \\
& LoRA  & 0.7002 & 65.89 \\
& DoRA  & 0.7002 & 66.12 \\
& MoRA  & 0.6997 & 67.98 \\
& HiRA  & 0.7002 & 70.81 \\
& \textbf{GateRA}  & 0.7123 & \textbf{72.11} \\
\bottomrule
\end{tabular}
\caption{
\textbf{Results on mathematical reasoning tasks.}
We evaluate models on the GSM8K benchmark with MetaMath~\cite{yu2023metamath} training.
}
\label{tab:math}
\end{table}

\begin{table*}[t]
    \centering
\begin{tabular}{lccccccccc}
\toprule
\textbf{Component} & \textbf{BoolQ} & \textbf{PIQA} & \textbf{SIQA} & \textbf{ARC-c} & \textbf{ARC-e} & \textbf{OBQA} & \textbf{HellaS} & \textbf{WinoG} & \textbf{Average} \\
\midrule
FC, QKV & \textbf{75.72} & \textbf{89.45} & \textbf{82.19} & \textbf{85.15} & \textbf{93.64} & \textbf{87.60} & \textbf{96.21} & \textbf{90.29} & \textbf{87.53} \\
FC      & 74.28 & 89.12 & 81.12 & 82.68 & 93.27 & 87.00 & 96.00 & 88.79 & 86.53 \\
QV      & 73.80 & 88.95 & 80.84 & 81.95 & 93.10 & 86.80 & 95.63 & 88.35 & 86.18 \\
QKV     & 75.17 & 89.17 & 80.71 & 82.25 & 93.48 & 87.20 & 95.46 & 88.56 & 86.50 \\
QK      & 73.42 & 88.35 & 80.02 & 81.56 & 92.85 & 86.45 & 95.01 & 87.74 & 85.68 \\
V       & 70.58 & 87.94 & 78.92 & 79.70 & 91.89 & 85.71 & 94.00 & 85.88 & 84.33 \\
Q       & 71.36 & 86.92 & 78.42 & 79.23 & 90.90 & 85.35 & 93.61 & 85.62 & 83.93 \\
K       & 71.00 & 86.85 & 78.63 & 79.88 & 90.78 & 85.10 & 93.72 & 85.03 & 83.88 \\
\bottomrule
\end{tabular}
\caption{Performance of the LLaMA-3-8B model with \textbf{GateRA} integrated into various components. GateRA consistently achieves the best performance when applied jointly to the FC and QKV layers.}
\label{tab:component_ablation}
\end{table*}

\subsection{Ablation Studies}

\paragraph{Effect of Rank on Adaptation Capacity}

To evaluate the parameter-efficiency of GateRA, we compare its performance under two adaptation ranks: $r=16$ and $r=32$, across eight commonsense reasoning benchmarks. As illustrated in Figure~\ref{fig:rank_ablation}, GateRA with $r=16$ achieves an average accuracy of 86.70\%, which is highly competitive with the 87.53\% obtained by the $r=32$ setting. Remarkably, the lower-rank configuration requires only half the trainable parameters, yet maintains strong task-wise performance across all datasets, with minimal degradation. For instance, GateRA with $r=16$ achieves comparable or even superior performance to the higher-rank baseline on PIQA, ARC-E, and BoolQ. These results highlight the robustness of our token-aware gating mechanism, which effectively prioritizes informative tokens under limited adaptation budgets.

\begin{figure}[h]
    \centering
    \includegraphics[width=\linewidth]{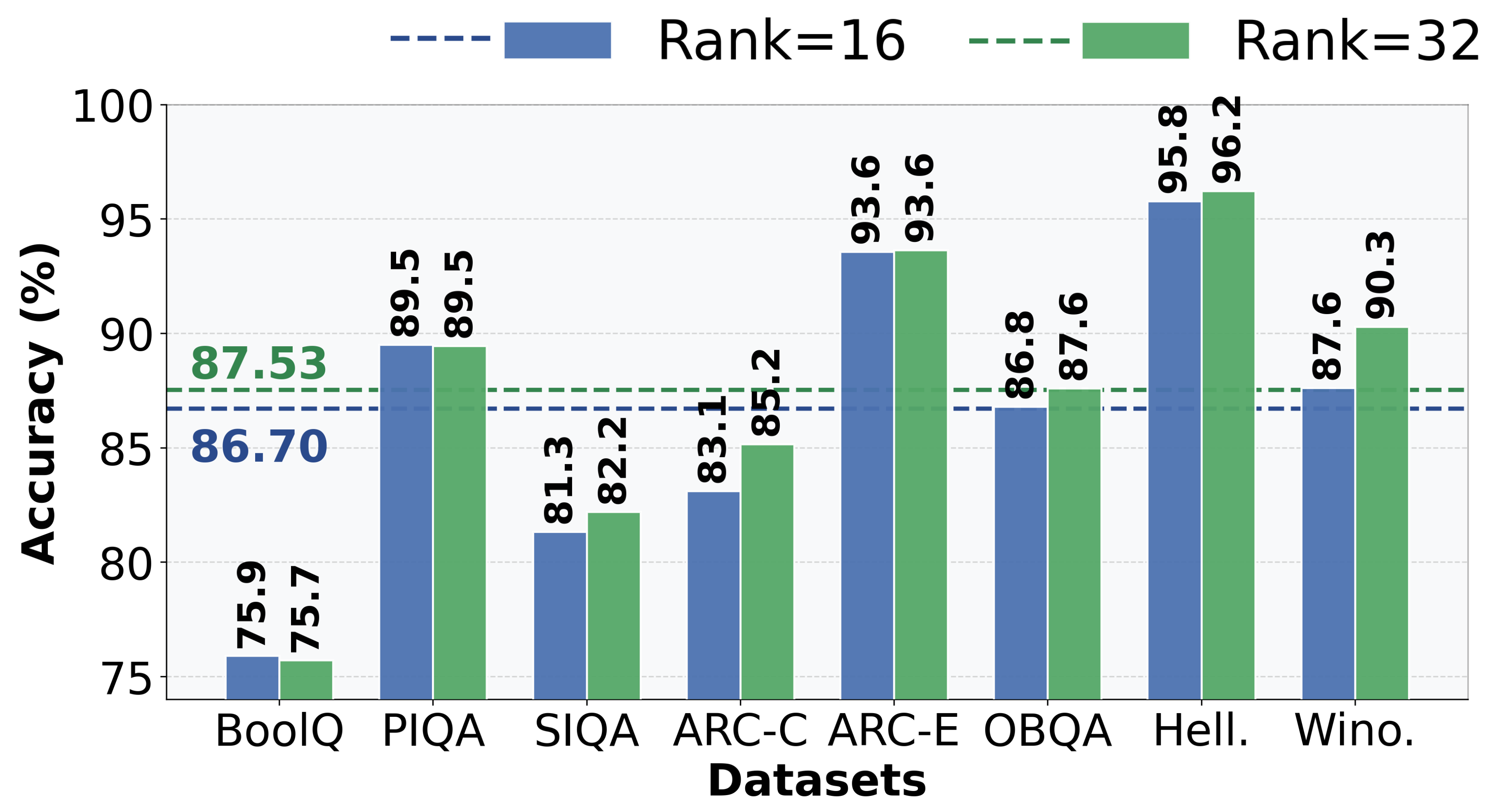}
    \caption{
    Token-adaptive performance under different ranks.
    }
    \label{fig:rank_ablation}
\end{figure}
\paragraph{Impact of Component-Wise Integration} 
We investigate how integrating GateRA into different subcomponents of the transformer affects performance across commonsense reasoning tasks. As shown in Table~\ref{tab:component_ablation}, GateRA yields the highest average accuracy (87.53\%) when applied jointly to both the final feedforward layer (FC) and the multi-head attention pathways (Q, K, V). This configuration consistently outperforms partial or isolated integration strategies. Among single-component variants, integrating into QV or QKV achieves better results than Q-only, V-only, or K-only setups, indicating that information flow through query and value channels is more crucial for downstream adaptation. Notably, integrating into FC alone performs competitively (86.53\%), highlighting the importance of adapting representations at the output level. Overall, these results confirm that GateRA benefits from a broader integration scope, and can still be effective when applied selectively.

\begin{table}[t]
\centering
\label{ab:para}
\begin{tabular}{l|l|c|c}
\toprule
\textbf{Model} & {Method} & \textbf{Trainable} & \textbf{Avg.} \\
\midrule
\multirow{4}{*}{L3-8B}
& w/o GateRA  & 0.7002 & 86.72 \\
& w/ Static-GateRA  & 0.7123 & 86.97 \\
& w/o Regularization  & 0.7123 & 87.08 \\
&  \textbf{GateRA}  & 0.7123 &  \textbf{87.53} \\
\bottomrule
\end{tabular}
\caption{
\textbf{Ablation study on gating mechanisms for commonsense reasoning tasks.} 
We compare GateRA with multiple variants to assess the effectiveness of its data-dependent modulation and regularization. Specifically, we analyze: (1) a HiRA baseline without any gating (w/o GateRA), (2) a static gating variant that replaces $g(x)$ with a learnable tensor of the same shape (Static-GateRA), and (3) GateRA without entropy regularization (w/o Regularization). 
}
\end{table}

\paragraph{Effectiveness of Data-Driven Gating.}
To assess the benefit of our proposed token-aware modulation function $g(x)$, we compare \textbf{GateRA} with a variant that replaces $g(x)$ with a learnable but static gating tensor of the same shape (\textbf{Static-GateRA}). While this variant maintains the same parameter count, it lacks input-awareness and results in lower performance (86.97\% vs. 87.53\%). This highlights the importance of data-driven gating, which allows the model to adaptively modulate updates based on input content, selectively attending to challenging or ambiguous tokens during reasoning.

\paragraph{Role of Entropy-Based Regularization.}
We further investigate the impact of the entropy-based regularization term that encourages confident and sparse gating decisions. Removing this term (w/o Regularization) leads to a slight performance degradation (87.08\% vs. 87.53\%), suggesting that entropy regularization plays a key role in stabilizing gate behaviors. It helps GateRA avoid overly diffused updates and promotes near-binary gating, which improves interpretability and generalization.

\section{Conclusion}

We propose \textbf{GateRA}, a unified and lightweight framework for test-time adaptation in PEFT methods. GateRA introduces a token-aware modulation mechanism that dynamically adjusts the strength of magnitude of weight usage based on epistemic uncertainty, enabling selective and input-specific adaptation. This allows the model to concentrate updates on uncertain or task-critical tokens while preserving generalizable representations. We further connect the gating function to a spike-and-slab prior and introduce entropy-based regularization to promote sparse and confident adaptation. Theoretically, GateRA induces a soft masking effect on gradient flow, improving the balance between stability and adaptability. Extensive experiments across multiple NLP benchmarks demonstrate that GateRA consistently improves over PEFT baselines. 

\section{Acknowledgments}
This work was supported in part by the University of Amsterdam and the Informatics Institute. Cees G. M. Snoek is (partially) funded by the Horizon Europe project ELLIOT (GA No. 101214398).

\bibliography{ref}

\end{document}